\newtheorem{theorem}{Theorem}[section] 
\newtheorem{proposition}[theorem]{Proposition}
\newtheorem{definition}[theorem]{Definition}
\begin{document}
\title{Equalizing Financial Impact in Supervised Learning}
\author{Govind Ramnarayan\footnote{Email: govind@mit.edu. Supported by Elchanan Mossel's NSF award CCF 1665252 and NSF award DMS-1737944.}\\}
\maketitle
\begin{abstract}

Notions of ``fair classification'' that have arisen in computer science generally revolve around equalizing certain statistics across protected groups. This approach has been criticized as ignoring societal issues, including how errors can hurt certain groups disproportionately. We pose a modification of one of the fairness criteria from Hardt, Price, and Srebro [NIPS, 2016] that makes a small step towards addressing this issue in the case of financial decisions like giving loans. We call this new notion ``equalized financial impact.''

\end{abstract}

\section{Introduction}
\label{sec:abstract}

Machine learning is revolutionizing the way we interact with the world. Popular websites use algorithms to analyze user data and recommend videos, customize social media feeds, and optimize advertisements. Unsurprisingly, machine learning is taking a large role in making decisions about human beings, ranging from credit to parole decisions, and is likely to be more and more widely used in the future. It is not hard to imagine that, even in cases where the final decisions are made by people, they will be doing so with advice from algorithms that make inferences from patterns in petabytes of data.

 Some proponents of machine learning have suggested that not only are these algorithms able to leverage the increasing amount of data we have access to, but also that they might be able to make these decisions more fairly, as they seem to not be subject to human biases. There is some truth to these claims. For example, an algorithm to judge whether criminal defendants will recidivate can be used nationwide (or potentially even world-wide). This can alleviate biases that may be local. For instance, a judge in a certain county could be heavily biased against African-American defendants. By replacing her judgments with judgments generated by an algorithm, we can prevent this bias from affecting defendants in this county. Such positive effects could be achieved even by requiring that the judge agrees with the algorithm in 95\% of cases. However, the widespread use of machine learning also poses a major worry. If algorithms themselves can be biased, for some definition of this, then widespread usage may be institutionalizing this bias. Recently, many researchers have raised significant concerns that these algorithms can be biased in problematic ways.

One notable example is the COMPAS system, which has been used to score criminal defendants and judge whether or not they are likely to recidivate. These scores are then used in parole considerations. Thus, ensuring that these scores do not discriminate against certain demographic groups is of utmost importance. In their study of the COMPAS system, Larson, Mattu, Kirchner, and Angwin~\cite{LMKA16} looked at 10,000 criminal defendants and compared their predicted recidivism rates with their actual recidivism rates over a two year period. Their findings showed that African-American defendants were scored more harshly than their Caucasian peers:

``\ldots black defendants who did not recidivate over a two-year period were nearly twice as likely to be misclassified as higher risk compared to their white counterparts (45 percent vs. 23 percent)\ldots white defendants who re-offended within the next two years were mistakenly labeled low risk almost twice as often as black re-offenders (48 percent vs. 28 percent)\ldots'' (from~\cite{LMKA16})

This is evidence that automated classifiers can exhibit significantly different behavior on different races, despite the fact that they may not seem to be subject to human biases. The above behavior is at the very least worrying, and suggests that the algorithm should be modified such that this behavior does not occur.

It is tempting to think that we can avoid such discrimination by not including group membership in our data. This sort of ``fairness through unawareness'' may seem attractive, as it might seem impossible for a classifier to discriminate against a protected group if it does not even know who is in that group. However, this is quite well-known to be false. Membership in the protected group is likely to be correlated with other data that the classifier is given. The fact that ``fairness through unawareness'' like we have described is basically unachievable has been brought up by many authors (see Dwork et al.~\cite{DHPRS11} for a further discussion of this). Dwork et al. advocate that we instead strive for ``fairness through awareness'' - by making our algorithms aware of protected groups, they can explicitly prevent certain biases from occurring. We also work in this framework.

\subsection{Our Contribution}
We propose the idea of considering how mistakes affect different groups, and factoring this into the existing ideas of group fairness. Namely, if group A is harmed more by say, false negatives, than group B, then a classifier should aim to make fewer false negatives on group A than group B. This idea is fairly broad, and has independently been raised by other authors. One example is Altman et al.~\cite{AVW18}, who make a convincing argument that such a ``harm-based'' approach to fairness is preferable to equalizing error rates without accounting for how mistakes can harm different groups differently. In their argument, they appeal to well-studied ethical scenarios like the effects of (false) incarcerations on African-American communities to illustrate how errors in classification can harm certain communities disproportionately. 

We raise a similar idea about equalizing the affect of mistakes across groups, and focus on equalizing false negative rates for loans. We think that this case is particularly important for three reasons. First, the decreasing marginal utility of wealth gives a clear reason why denying a poor, but qualified, loan applicant is inherently worse than denying a rich, qualified loan applicant from the point of view of the applicant. Second, researchers have empirically studied how utility depends on wealth in various cases (see e.g.~\cite{BT97}), which suggests that we can estimate approximately \emph{how much} more harm a false negative on a poor loan applicant creates than a false negative on a rich applicant. Finally, equalizing impact for loans has the indirect effect of favoring historically disadvantaged groups, as people in these groups often have lower incomes. Hence, equalizing impact moves in the direction of notions of fairness suggested by theories of corrective justice~\cite{Binns17}~\cite{Parfit97}.

In this paper, we describe how to equalize impact for false negatives for loans, given a marginal utility function. Our main contribution is describing a natural way to factor individual utility considerations into the group fairness framework of Hardt, Price, and Srebro~\cite{HPS16}. For the loans case, this is basically done by grouping together people with similar incomes, and weighting mistakes on each group accordingly (Section \ref{sec:main-section}). However, the idea of equalizing expected impact of mistakes can be applied much more broadly, and indeed researchers (including Altman et al.~\cite{AVW18}) have given strong ethical reasons for believing that this philosophy should be applied in every scenario, for false positive and false negative rates. To this end, we note that the method we describe for equalizing financial impact of false negatives can be generalized to equalizing false positive rates or false negative rates for any scenario. The requirement for doing so is that we need a good approximation of how much harm a mistake (either false positive or false negative) does to a person based on their attributes. If this approximation depends strongly on many attributes, then the classifier needs a large number of labelled samples to enforce this condition\footnote{Roughly exponential in the number of attributes.} (see Section \ref{sec:discussion} for slightly expanded discussion of this).

In Section \ref{sec:notation}, we outline the notation we will use for the rest of the paper. In Section \ref{sec:preliminaries}, we describe how some computer scientists have been addressing these issues by enforcing certain statistical constraints on classifiers, in particular the work of Hardt, Price, and Srebro~\cite{HPS16} which describes how to modify classifiers such that they have equal false positive and false negative rates across groups. In Section \ref{sec:main-section}, we will further describe why believe this sort of criterion can be unfair, and suggest a way to modify the framework of~\cite{HPS16} to support this modification. 
\section{Notation}
\label{sec:notation}
We will consider the setting where supervised learning is used to learn (fair) classifiers. A classifier will be a function from examples (in our setting, descriptions of people) to binary-valued decisions\footnote{These decisions need not be binary-valued in general, but we will focus on the binary case.}, and we would like these decisions to be accurate. In supervised learning, a learning algorithm uses labeled examples to learn an accurate classifier. We define some notation relating to supervised learning that we will use for the rest of the paper. We refer the reader who is interested in a more detailed introduction to supervised learning in the context of fairness to the excellent survey of Barocas and Selbst~\cite{BS16}.
\begin{itemize}
\item Let $X$ denote the set of features.
\item Let $A \subseteq X$ denote the set of protected features. For a fixed $a \in A$, we will often refer to the set of people with $A=a$ as members of the protected group $a$. Note that not all protected groups are necessarily disadvantaged. Protected groups are simply instantiations of protected attributes. For example, if one of the protected features is an ethnicity, it will be prudent to allow one of the groups to be the majority ethnicity for our definitions.
\item Let $Y$ denote a binary-valued \emph{ground truth} - that is, if we are considering the problem of giving out loans, someone who actually pays back a series of loans would have $Y=1$.\footnote{In real situations, $Y$ may not be determined at the time the decision is made. However, it is useful from an algorithmic perspective to view it as fixed at the time of decision.}
\item Let $\hat{Y}$ denote the binary output of the classifier - in the case of loans, whether or not the classifier decides to give them a loan.
\item Given a group $a \in A$, we will denote the \emph{base rate} of $a$ as $\Pr(Y = 1 | A = a)$, where the probability is taken over all the people we will classify.
\item The \emph{loss} of classifier will be gauged according to a loss function $\ell:\hat{Y} \times Y \to \mathbb{R}$. That is, the expected loss of a classifier $\hat{Y}$ will be $\mathbb{E}[\ell(\hat{Y}, Y)]$. 
\end{itemize}
Finally, for random variables $A$ and $B$, we say that $A \perp B$ if $A$ and $B$ are probabilistically independent.
\section{Preliminaries: Statistical Parity and Equalized Opportunity}
\label{sec:preliminaries}
\subsection{Overview}
Recall that a supervised learning algorithm optimizes some classifier with respect to labeled training examples. However, this trained classifer may exhibit statistical biases towards certain groups. 
Many works explicitly enforce statistical constraints which prohibit the types of biases we are worried about. The following are examples of these types of statistical constraints:
\begin{enumerate}
\item The fraction of male defendants classified as ``low risk'' should equal the fraction of female defendants classified as ``low risk.'' 
\item The fraction of African-American defendants who are categorized as ``low risk'' but recidivate should be equal to the fraction of Caucasian defendants who are classified as ``low risk'' but recidivate. Similarly, the fraction of African-American defendants who are categorized as ``high risk'' but do not recidivate should be equal to the fraction of Caucasian defendants who are classified as ``high risk'' but do not recidivate.
\end{enumerate}
These two types of statistical constraints are fundamentally different. The first one stipulates that the fraction of people in one protected group classified positively (in this case, classified as ``high risk'') is equal to the fraction of people in another protected group classified positively. Note that this makes no appeal to the true base rates within these groups. If, for example, men were more likely to recidivate than women, this notion of statistical fairness does not care: it will still constrain the classifier to give the ``high risk'' classification to men and women equally. This notion is called \emph{statistical parity}, and is discussed in~\cite{DHPRS11}. We will briefly talk about it as relevant to the motivation of our definition; however, beyond motivation, it bears little relevance to our result.

The second notion stipulates that false positive rates and false negative rates should be equal across protected groups, where here we view a false positive as a person who is not going to recidivate receiving a ``high risk'' classification, and a false negative as a person who is going to recidivate receiving a ``low risk'' classification. Equivalently, this stipulates that protected attributes (in this case, race) are allowed to influence the rate of positive classifications (which is not true if we are enforcing statistical parity), but only to the extent that they are actually correlated with the true outcome (in this case, whether or not someone actually recidivates). This is known as \emph{equalized odds}, and appears in~\cite{HPS16}. 

\subsection{Statistical Parity and Justifying Affirmative Action}
\label{sec:stat-parity}
Statistical parity is a very natural idea, and was explicitly addressed in the seminal paper of Dwork et al.~\cite{DHPRS11}. Formally, this can be defined as follows. Fix any protected features $a_1, a_2 \in A$. Then statistical parity mandates that 
\[ \Pr[\hat{Y} = 1 | A = a_1] = \Pr[\hat{Y} = 1 | A = a_2] \]
where the probability is taken over all the people we will classify. In practice, we will be okay with guaranteeing approximate equality, up to an addition of a small constant $\varepsilon > 0$. Note that this condition implies that the classification $\hat{Y}$ is independent of the protected features $A$ (we will use the notation $\hat{Y} \perp A$ to denote probabilistic independence). Unfortunately, this comes with some major downsides. The main one is that, if the base rates are unequal, i.e. $\Pr(Y = 1 | A = a_1) \neq \Pr(Y=1 | A = a_2)$, then this stipulation prevents the ground truth from being permissible as a classifier; that is, it forces $\hat{Y} \neq Y$. In this case, the ground truth $Y$ is actually correlated with the protected features $A$, but statistical parity forces us to choose a classifier such that $\hat{Y} \perp A$. Furthermore, enforcing statistical parity might encourage the classifier to give out positive and negative classifications to people who do not deserve them, just to equalize the rates of classification across groups. While this may enforce group fairness, this type of behavior is blatantly unfair to the unlucky individuals in the group who are singled out for random treatment. Dwork et al. bring up the point that statistical parity can be very \emph{unfair} to individuals, but note that the notion of fairness that they endorse can sometimes be close to statistical parity, which yields a form of justifiable affirmative action. Their notion of fairness stems from enforcing that similar individuals are treated similarly by the classifier, for a problem-specific notion of similarity. We will not discuss it more in this document.

Statistical parity is often viewed as suboptimal for the above reasons. However, in light of the arguments made by Binns~\cite{Binns17} and inspired by Parfit~\cite{Parfit97}, we believe statistical parity should not be ignored due to its miscomings in certain scenarios. The scenarios where statistical parity could be bad are ones in which the base rates of different groups are unequal. Some of these examples are brought up in the paper of Dwork et al. But just because it is bad in some cases, this does not mean it is bad in all cases. In cases where this difference in base rates can be explained by historical mistreatment, perhaps current base rates are not the best guide for classification, and statistical parity might provide something that is closer to fair. 

However, statistical parity still suffers the problem that naively forcing a classifier to satisfy statistical parity without giving it enough information can lead to unfair treatment. Hence, rather than viewing statistical parity alone as a means of affirmative action, we think it is more advisable to incorporate harms caused by historical injustices into other metrics for fairness. The motivation for this is similar to the motivation for ensuring fairness through awareness; rather than blindly enforcing statistical parity without knowing why the true base rates of different groups are unequal, it makes more sense to understand \emph{why} base rates are unequal, and, to the extent that it is possible, explicitly factor those reasons into the constraints we impose on our classifier. Even if these types of considerations justify statistical parity in some cases, they will not justify it in others. Building up a theory of algorithmic fairness that factors in historical mistreatment is a great goal, but an ambitious one. In the meanwhile, we think it is advisable to consider statistical parity in certain cases where it can be justified by additional means, much like what is suggested by Dwork et al.~\cite{DHPRS11}.

\subsection{Equalized Odds and Opportunity}
\label{sec:equalized-odds}
Now we discuss the second notion of fairness, called equalized odds, which appears in the work of Hardt, Price, and Srebro~\cite{HPS16}. The motivation of equalized odds is that, unlike statistical parity, it does not prevent the ground truth from being a classifier in the case where base rates are unequal. Equalized odds enforces the conditions that the false positive rate and the true positive rate are the same across all groups. For any $a_1, a_2 \in A$, consider the following possible constraints on a classifier $\hat{Y}$:
\begin{equation}
\label{eq:false-positive-equal}
\Pr[\hat{Y}=1 | Y=0, A=a_1] = \Pr[\hat{Y}=1 | Y=0, A = a_2] \hspace*{2cm} \text{    (False positive rates equal)}
\end{equation}
\begin{equation}
\label{eq:true-positive-equal}
\Pr[\hat{Y}=1 | Y=1, A=a_1] = \Pr[\hat{Y}=1 | Y=1, A = a_2] \hspace*{2cm} \text{    (True positive rates equal)}
\end{equation}
Note that since $\Pr[\hat{Y}=0 | Y=1, A = a_1] = 1 - \Pr[\hat{Y}=1 | Y=1, A = a_1]$, Equation \ref{eq:true-positive-equal} is equivalent to stipulating that false negative rates are equal:
\begin{equation}
\label{eq:false-negative-equal}
\Pr[\hat{Y}=0 | Y=1, A=a_1] = \Pr[\hat{Y}=0 | Y=1, A = a_2] \hspace*{2cm} \text{    (False negative rates equal)}
\end{equation}
Equalized odds stipulates that false positive and false negative rates should be equal across groups.

\begin{definition}[Equalized Odds~\cite{HPS16}]
\label{def:equal-odds}
A classifier satisfies \emph{equalized odds} if Equation \ref{eq:false-positive-equal} and Equation \ref{eq:false-negative-equal} hold for any $a_1, a_2 \in A$.
\end{definition}

The combination of these two conditions is equivalent to ensuring that $(\hat{Y} \perp A) | Y$ -- or, in words, that the classification $\hat{Y}$ is conditionally independent of the protected attributes given the ground truth $Y$. As already mentioned, the main advantage of this definition over statistical parity is that this does constraint does not rule out the ground truth classifier $Y$, as clearly, $(Y \perp A) | Y$. This can also be seen directly from the equations above, as when the classifier is the same as the ground truth, we get that the false positive rates (resp. false negative rates) for every group are 0, and therefore trivially equal. In words, equalized odds permits a classifier to use the protected attribute $A$ in classification to the extent that it is \emph{meaningful}. If $A$ is highly correlated with $Y$, or in other words, if base rates across groups are very different, then the classifier $\hat{Y}$ is allowed to depend strongly on $A$. However, if $A$ is not very correlated with $Y$, then $\hat{Y}$ is not allowed to depend strongly on $A$. It may seem that there is no reason that an accurate classifier would be correlated with $A$ in the case where $Y$ is not very correlated with $A$. However, in practice, we do see instances of this type of unnecessary correlation, like in COMPAS, where false positive and false negative rates differ greatly between races. If the only condition we want from a classifier is that it maximizes its accuracy, a classifier might choose to correlate very highly with membership in a protected group in order to improve accuracy and/or generalization slightly. 

The authors of~\cite{HPS16} also pose a relaxed version of equalized odds, which they call \emph{equalized opportunity}. This only enforces that true positive rates (or equivalently, false negative rates) are equal.
\begin{definition}[Equalized Opportunity~\cite{HPS16}]
\label{def:equal-opp}
A classifier satisfies \emph{equalized opportunity} if Equation \ref{eq:false-negative-equal} holds for any $a_1, a_2 \in A$.
\end{definition}

 While this condition does not have the same clean interpretation as equalized odds in terms of conditional independence, it is an easier condition for a classifier to satisfy, and hence permits for a higher accuracy classifier. In cases where we are more worried about false negatives than false positives from the context of fairness, equalized opportunity can provide a meaningful fairness guarantee while still being easy to satisfy. Equalized opportunity is the inspiration for the condition we suggest in Section \ref{sec:main-section}. 

\section{Equalized Financial Impact}
\label{sec:main-section}
\subsection{Overview and Justification}
In this section, we argue that even without considering historical context for these disadvantages, it may be admissible, even necessary, to allow for smaller false negative rates on disadvantaged groups compared to other groups that are not disadvantaged. Our goal is to encourage researchers to revisit existing fairness notions by considering the \emph{expected impact} of incorrect decisions rather than just the rate of incorrect decisions. 
 
 Equalizing expected impact and rate could be the same in many scenarios where mistakes impact all protected groups equally. In the remainder of this paper, we will focus on the impact of false negatives for the problem of giving loans, though our argument applies without loss of generality to most financial decision problems, and can be naturally extended to more general problems as discussed in the Introduction and in Section \ref{sec:discussion}. To define what it means for a group to be disadvantaged, we will not appeal to any kind of historical considerations. Disadvantaged groups will simply implicitly be protected groups that have lower incomes than other groups\footnote{Since our condition does not explicitly use historical context, our argument will apply to \emph{all} ``disadvantaged'' groups, not just \emph{historically} disadvantaged ones}. We will argue that in this case, equalizing expected impact of false negatives and equalizing false negative rates are considerably different, and that equalizing expected impact is more fair to disadvantaged groups. We will call equalizing expected impact for financial situations \emph{equalizing financial impact}.
 
We also think equalizing financial impact may move existing notions of fairness closer to the kinds of notions that take into account historical injustices~\cite{Binns17}\cite{Parfit97}. Instead of taking into account historical injustices, we take into account current-day income. While this is not a perfect indicator of historical injustice, many historically disadvantaged groups have lower incomes, and so this perspective will be beneficial to them, as well as to other people will lower incomes. However, equalizing financial impact should not be seen as a \emph{substitute} for accounting for historical injustices in fairness, but just as a notion that aligns with this goal.

The motivation for our definition comes from the definition of equalized opportunity (Equation \ref{def:equal-opp}) from the work of Hardt, Price, and Srebro~\cite{HPS16}. Let us consider the case where our classifier is deciding something that can be plausibly tied to financial value, like giving out a loan. Fix protected groups $a_1$ and $a_2$. Equalized opportunity says that, out of all the people in group $a_1$ who are qualified for loans, the fraction of them that do \emph{not} receive loans should be the same as the fraction of people in group $a_2$ who are qualified for loans and do not receive loans. 

Now suppose $a_2$ is a group of people with lower income (say, median income \$25000 a year) and $a_1$ is a group of people with higher income (say, median income \$120000 a year). Then we know that $a_1$ and $a_2$ have very different marginal utilities of wealth, since it is a widely accepted phenomenon that the marginal utility of wealth decreases as a person becomes more wealthy\footnote{Different incomes can have very different meanings depending on location; we will ignore this for simplicity.}. It seems plausible that this phenomenon should extend to the marginal utility of receiving a loan. People who make less money probably have uses for loaned money that will bring them more value than the corresponding uses of a wealthier person. For example, someone who makes \$25000 a year may be able to use the loan to purchase a car, which they can use to expand their employment opportunities and/or decrease their cost of living by moving to a cheaper area. This change in lifestyle resulting from the loan can lead to a direct financial gain for them. A person who makes \$120000 may also have good uses for the loan, which give them some sort of financial gain. But this financial gain, unless it is an extremely large amount, will not bring them as much value as the loan made to the person with an income of \$25000 a year. Hence, it appears that decreasing marginal utility for wealth implies that loans have less marginal utility for wealthier people. This is particularly true if marginal utility of wealth decreases very quickly, and hence even if a wealthier person can utilize the loan to make somewhat more money than the poorer person, this increased amount of money will be dwarfed by the rapid decrease in the value of money according to this model. Indeed, according to some models, the utility of wealth can be modeled logarithmically, which implies that the marginal utility of wealth scales like $1/x$~\cite{EasKlein2010}.

From here on, we will assume that loans have decreasing marginal utility with respect to wealth, when restricted to people who can pay back the loan. But then, equalizing false negative rates between two classes $a_1$ and $a_2$ which have vastly different distributions of incomes is now less satisfactory. Even though the rates of mistakenly not giving a loan to qualified people in $a_1$ and not giving a loan to qualified people in $a_2$ are the same, these mistakes generally have very different effects on these people. So, as far as the \emph{impact} of these mistakes, the negative impact on the people in the poorer class is far greater. Hence, our goal will be to modify equalized opportunity to equalize the negative impact of these mistakes. 

\subsection{Formal Definition}
Let $\psi: \mathbb{R}^{+} \to \mathbb{R}^{+}$ map incomes of qualified people to the marginal utility of a loan of fixed size. For simplicity, we will assume that the marginal utility of a loan is purely a function of income, and so $\psi$ captures the marginal utility of a loan perfectly. For a brief discussion on how to generalize $\psi$ to a more realistic setting, see Section \ref{sec:psi}. 

\begin{definition}
\label{def:equalized-impact}
Let $B \subseteq X$ be the feature that corresponds to income. Now we are ready to define equalized financial impact. A classifier $\hat{Y}$ satisfies equalized financial impact if, for all $a_1, a_2 \in A$, we have that 
\begin{equation} 
\label{eq:equalized-impact-1}
\sum_{b \in B} \Pr[\hat{Y} = 0, B=b | A=a_1, Y=1] \cdot \psi(b) =  \sum_{b \in B} \Pr[\hat{Y} = 0, B=b | A=a_2, Y=1] \cdot \psi(b)
\end{equation}
\end{definition}

So all that we have to do to modify the condition for equalized opportunity to satisfy the definition of equalized financial impact is weight how much we care about a false negative on a person based on their income. However, this condition is a little annoying, as the space of incomes is continuous. In Section \ref{sec:achievability}, we argue that we can actually just partition people into a finite set of income classes, which will let us rewrite the condition in a way such that we can achieve it with techniques from~\cite{HPS16}. In order to do this, we will need to leverage a certain property of $\psi$: that it can be well-approximated by a finite histogram.

\subsubsection{Properties and Discussion of $\psi$}
\label{sec:psi}

In order to make our condition easier to enforce, we will make the following three assumptions about $\psi$:
\begin{enumerate}
\item $\psi$ is nonnegative and bounded above.
\item $\psi$ is monotonically decreasing.
\item $\psi$ is a smooth function.
\end{enumerate}
Now we argue that $\psi$ is well-approximated by a finite histogram, where the number of buckets in the histogram is inversely proportional to the accuracy of the estimation. Noting that Equation \ref{eq:equalized-impact-1} is scale invariant, we can scale $\psi$ such that $\sup_x \psi(x) = 1$, without loss of generality. The three conditions together tell us that the function $\psi$ is sufficiently nice such that the following holds. Fix any $\varepsilon > 0$ and any point $x \in \mathbb{R}$. Then there exists some $\delta > 0$ such that, for any $y \in (x - \delta, x + \delta)$, we have that $\psi(y) \in (\psi(x) - \varepsilon, \psi(x) + \varepsilon)$. The upshot of this is that $\psi$ can be approximated by a histogram. Fix $\varepsilon > 0$ to be a small constant. Then there is a finite set of points $Z = \{ z_1, \ldots, z_n\}$ (indeed, with $n = 1 / \varepsilon$) such that, for any $y \in \mathbb{R}$, there exists some $z_i \in Z$ such that $y \in (z_i, z_{i+1})$ and $\psi(y) \in (\psi(z_i) - \varepsilon, \psi(z_i))$. Hence, if we define $\widetilde{\psi}$ such that $\widetilde{\psi}(y) = \psi(z_i)$, then $\widetilde{\psi}$ never differs from $\psi$ by more than an additive factor of $\varepsilon$.

Before progressing to how we can utilize the framework of~\cite{HPS16} to achieve equalized financial impact, we briefly discuss the various assumptions we made on $\psi$, as well as how to generalize $\psi$ to settings where factors other than income affect the marginal utility of a loan.

The assumption that $\psi$ is bounded above may be debatable - perhaps one could suggest that a loan could have unbounded marginal value as we consider incomes going to 0. Not only does this seem unlikely to be true, but also this kind of unbounded $\psi$ leads to an overly strong constraint when applied to people who are unemployed\footnote{Perhaps this is an argument for the inclusion of other relevant factors into the domain of $\psi$.}, which might force a ``fair'' classifier to always give these people a loan. Hence, imposing an upper bound on $\psi$ seems reasonable. The second condition is simply that a loan should have marginal utility that decreases with wealth - we have already argued for this based on the decreasing marginal utility of wealth. The final condition arises naturally out of existing models of marginal utility versus wealth.

In reality, the marginal utility of a loan is not necessarily purely a function of income. For instance, it could also rely on other aspects of wealth, like assets. Without loss of generality, we will ignore this. These kinds of other features (financial or not) that are significant when gauging the affect of a loan can be factored into this framework in a similar way, by making $\psi$ into a multivariate function of all these features. Factors that affect the marginal utility of a loan in a relatively insignificant way can just be averaged over, so $\psi$ is just the expected marginal utility of a loan given that all its input variables are fixed.

\subsection{Achievability}
\label{sec:achievability}

We define the meta-feature $B'$ such that $B' = z_i \iff B = b \text{ and } \widetilde{\psi}(b) = \psi(z_i)$. Now, up to a small loss factor, we can replace $\psi$ with $\widetilde{\psi}$ and use the fact that $\psi(b) \approx \widetilde{\psi}(b) = \psi(z_i)$ to rewrite an approximate version of Equation \ref{eq:equalized-impact-1} as follows:
\begin{equation}
\label{eq:equalized-impact-2}
 \sum_{z_i \in Z} \Pr[\hat{Y} = 0, B' = z_i | A=a_1, Y=1] \cdot \psi(z_i) =  \sum_{z_i \in Z} \Pr[\hat{Y} = 0, B' = z_i | A=a_2, Y=1] \cdot \psi(z_i)
\end{equation}
This formulation is more convenient than Equation \ref{eq:equalized-impact-1} for the reason that the domain of $B'$ is finite (in fact, its size is roughly $1/\varepsilon$). Therefore, by considering $B'$ to be a ``protected feature,'' we can simply adapt the framework of~\cite{HPS16} to guarantee that a learned classifier satisfies Equation \ref{eq:equalized-impact-2} in much the same way that~\cite{HPS16} guarantee equalized opportunity (and equalized odds). Now we prove the claim that a classifier that satisfies Equation \ref{eq:equalized-impact-2} approximately satisfies Equation \ref{eq:equalized-impact-1}.

\begin{proposition}
\label{prop:approx}
A classifier $\hat{Y}$ that satisfies Equation \ref{eq:equalized-impact-2} also satisfies that
\[ \sum_{b \in B} \Pr[\hat{Y} = 0, B=b | A=a_1, Y=1] \cdot \psi(b) \stackrel{2 \varepsilon}{\approx}  \sum_{b \in B} \Pr[\hat{Y} = 0, B=b | A=a_2, Y=1] \cdot \psi(b) \]
where $x \stackrel{\varepsilon}{\approx} y$ if and only if $x \in (y - \varepsilon, y + \varepsilon)$.
\end{proposition}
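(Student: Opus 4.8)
The plan is to introduce shorthand for the two quantities being compared and for their histogram counterparts, and then to chain three estimates with the triangle inequality. For $j \in \{1,2\}$, write $L_j = \sum_{b \in B} \Pr[\hat{Y} = 0, B=b | A=a_j, Y=1]\,\psi(b)$ for the true financial impact appearing in Equation~\ref{eq:equalized-impact-1}, and write $\widetilde{L}_j = \sum_{z_i \in Z} \Pr[\hat{Y} = 0, B'=z_i | A=a_j, Y=1]\,\psi(z_i)$ for the histogram version appearing in Equation~\ref{eq:equalized-impact-2}. The hypothesis is exactly $\widetilde{L}_1 = \widetilde{L}_2$, so it suffices to show $|L_j - \widetilde{L}_j| < \varepsilon$ for each $j$ and then apply the triangle inequality $|L_1 - L_2| \le |L_1 - \widetilde{L}_1| + |\widetilde{L}_1 - \widetilde{L}_2| + |\widetilde{L}_2 - L_2|$ to conclude $|L_1 - L_2| < 2\varepsilon$.

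The conceptual crux is rewriting $\widetilde{L}_j$ as a sum over raw incomes $b$ rather than over buckets $z_i$. By the definition of the meta-feature, $B' = z_i$ holds precisely for those incomes $b$ with $\widetilde{\psi}(b) = \psi(z_i)$, so the event $\{B' = z_i\}$ is the disjoint union over such $b$ of the events $\{B = b\}$. Regrouping the sum accordingly gives $\widetilde{L}_j = \sum_{b \in B} \Pr[\hat{Y} = 0, B=b | A=a_j, Y=1]\,\widetilde{\psi}(b)$, since each income $b$ contributes its own joint probability weighted by $\psi(z_i) = \widetilde{\psi}(b)$ for the unique bucket $z_i$ containing it. This is the one step that requires care, as it hinges on correctly unpacking the bucketing relationship between $B'$ and $B$; once it is in place, $L_j$ and $\widetilde{L}_j$ are sums over the same index set differing only in the weight ($\psi(b)$ versus $\widetilde{\psi}(b)$).

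With both quantities expressed over $b$, the remaining estimate is routine. Subtracting and using the uniform approximation guarantee $|\psi(b) - \widetilde{\psi}(b)| < \varepsilon$ established in Section~\ref{sec:psi}, I would bound
\[ |L_j - \widetilde{L}_j| \le \sum_{b \in B} \Pr[\hat{Y} = 0, B=b | A=a_j, Y=1]\,|\psi(b) - \widetilde{\psi}(b)| < \varepsilon \sum_{b \in B} \Pr[\hat{Y} = 0, B=b | A=a_j, Y=1]. \]
The final sum is the marginal $\Pr[\hat{Y}=0 | A=a_j, Y=1] \le 1$, so $|L_j - \widetilde{L}_j| < \varepsilon$, which combined with the triangle inequality above completes the argument. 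I expect no real obstacle beyond being careful that the histogram bound holds pointwise for every income (not merely at the grid points $z_i$) and that collapsing the joint probabilities into the conditional marginal is valid; both follow directly from the setup in Section~\ref{sec:psi}.
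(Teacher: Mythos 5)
Your proposal is correct and follows essentially the same route as the paper's proof: the same regrouping of the event $\{B' = z_i\}$ into the events $\{B = b\}$ with $\widetilde{\psi}(b) = \psi(z_i)$, the same substitution of $\widetilde{\psi}(b)$ for $\psi(z_i)$, and the same $\varepsilon$-approximation applied to each side before combining. You merely make explicit two steps the paper leaves implicit --- the triangle inequality and the bound $\sum_b \Pr[\hat{Y}=0, B=b \mid A=a_j, Y=1] \le 1$ --- which is a welcome tightening, not a different argument.
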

\begin{proof}
We can rewrite the LHS of Equation \ref{eq:equalized-impact-2} as follows:
\begin{align*}
 \sum_{z_i \in Z} \Pr[\hat{Y} = 0, B' = z_i | A=a_1, Y=1] \cdot \psi(z_i) &=  \sum_{z_i \in Z} \sum_{b:\widetilde{\psi}(b) = \psi(z_i)}   \Pr[\hat{Y} = 0, B = b | A=a_1, Y=1] \cdot \psi(z_i) \\
&= \sum_{z_i \in Z} \sum_{b:\widetilde{\psi}(b) = \psi(z_i)} \Pr[\hat{Y} = 0, B = b | A=a_1, Y=1] \cdot \widetilde{\psi}(b) \\
&\stackrel{\varepsilon}{\approx} \sum_{z_i \in Z} \sum_{b:\widetilde{\psi}(b) = \psi(z_i)} \Pr[\hat{Y} = 0, B = b | A=a_1, Y=1] \cdot \psi(b) \\
&=  \sum_{b \in B} \Pr[\hat{Y} = 0, B = b | A=a_1, Y=1] \cdot \psi(b)
\end{align*}
The first line follows from the definition of $B'$, and the second line and third lines from the definition of $\widetilde{\psi}$. We can apply the same argument to the RHS of Equation \ref{eq:equalized-impact-2} to conclude the proof.
\end{proof}

We now sketch how to enforce that a classifier satisfies Equation \ref{eq:equalized-impact-2}, using the framework from~\cite{HPS16}. However, we do not give much background on~\cite{HPS16}, and refer the reader to their paper if interested. 

Roughly speaking, in~\cite{HPS16} the authors describe how to transform a given classifier $\hat{Y}$ into a classifier $\widetilde{Y}$ that satisfies equalized opportunity, such that the transformation is efficient and the accuracy remains similar in some cases. We can adapt this same transformation to guarantee equalized financial impact, by simply changing their equalized opportunity constraints to the equalized financial impact constraints. Note that our constraints use information about incomes - specifically, we use information about the meta-feature $B'$. Hence, our transformation will have to not only depend on the protected attributes $A$, but also on the meta-features $B'$: this is simply unavoidable since the condition we want to guarantee depends on the distribution of incomes. Hence, we might as well make $B'$ a protected feature in their framework, which is roughly what we do. Namely, we simply run the following optimization problem from~\cite{HPS16} to get $\widetilde{Y}$.

\begin{align*}
&min_{\widetilde{Y}} \hspace*{1cm} \mathbb{E}_{Y, \widetilde{Y}} \ell(\widetilde{Y}, Y) \\
&s.t. \text{ for all } (a,z) \in A \times B': \\
&\left(\Pr[\widetilde{Y}=1 | A=a_i, B' = z_j, Y=1]\right)_{i,j} \in \mathcal{P}(\hat{Y}) \\
&\text{ and } \forall a_1, a_2 \in A: \\
&\sum_{z_i \in Z} \Pr[\hat{Y} = 0, B' = z_i | A=a_1, Y=1] \cdot \psi(z_i) =  \sum_{z_i \in Z} \Pr[\hat{Y} = 0, B' = z_i | A=a_2, Y=1] \cdot \psi(z_i)
\end{align*} 
where $\left(\Pr[\widetilde{Y}=1 | A=a_i, B' = z_j, Y=1]\right)_{i,j}$ denotes a vector of size $|A| \cdot |B'|$ and the $(i,j)^{th}$ entry of the vector is $\Pr[\widetilde{Y}=1 | A=a_i, B' = z_j, Y=1]$. The first constraint ensures that the classifier $\widetilde{Y}$ only depends on the ground truth $Y$, the original classifier $\hat{Y}$, the protected variables $A$, and the income classes $B'$, as done in~\cite{HPS16}. The final constraint is just equalized financial impact. 

Since $|A|$ and $|B'|$ are both assumed to be finite, a finite number of samples suffices to estimate the distribution $\Pr[\hat{Y} | A, B', Y=1]$ well, which is sufficient to implement this procedure efficiently. The amount of labelled data we will need will be in terms of the sizes of $A$ and $B'$, so it is advantageous to have them be smaller from the standpoint of computational complexity. We can assume that we do not have too many protected groups $A$, so that will be small, and the size of $B'$ is $1/\varepsilon$. So the more accurately we want to equalize impact, the more samples and time we need. With regards to the accuracy of the classifier, if we assume that $\psi$ is constant, this linear program reduces to the one in ~\cite{HPS16}, and so we naturally get the same guarantee. We leave analysis of accuracy of the resulting classifier in other, more realistic, cases to future work.
\subsection{Discussion and Further Questions}
\label{sec:discussion}
At a high level, equalized financial impact stipulates that the false negatives that a classifier makes should have equal impact across all protected classes. We believe that this is a good notion of fairness in situations where the decision being made has financial impact, like in the case of deciding to whom one should give a loan. This is because it conforms to the spirit of equalized opportunity, in that all qualified individuals should be treated equally regardless of which protected class they are in, but defines ``equal treatment'' as saying that they should be equally harmed by false negatives on average, rather than saying that they should have the same false negative rate on average. We believe that the main takeaway message from this is that we should consider how much a classifier's mistakes hurt different groups. Sometimes mistakes in classification hurt the misclassified people in the same way, regardless of other factors. However, sometimes income (and other financial factors) are extremely relevant in gauging how much damage a misclassification does. This consideration can also extend to non-financial factors, as mentioned in Section \ref{sec:main-section}. Indirectly, this benefits many historically disadvantaged classes, that tend to have lower incomes in the present day. However, it does not directly take into account historical considerations.

We could also consider extending false positive fairness for loans to account for utilities just like we demonstrated for false negative fairness. This is justifiable for the same reasons that we outlined in Section \ref{sec:main-section}. The main obstruction is reasoning about the associated marginal utility function, which measures how much false positives hurt based on various attributes. For example, should it be increasing with income, decreasing with income, constant, or something else? Perhaps one can argue that false positives have abnormally bad effect on poorer people, as giving a loan that they cannot pay back can trap them in a cycle of debt. It is not clear to us how such considerations should be factored into the marginal disvalue of a false positive in general, but we think it is a very interesting and relevant question towards ensuring fairness for loans.

Even if this question is answered, equalizing both false positive impact and false negative impact could be hugely detrimental to the accuracy of the classifier\footnote{This could even be the case for just equalizing false negative \emph{or} false positive impacts for many cases}. Furthermore, it could be at odds with calibration and predictive parity just like equalized odds is~\cite{KMR16}~\cite{Choul17} (in fact, it trivially will be when the marginal utility function is constant for both false positives and negatives). Examining conditions of $\psi$ and the classifier $\hat{Y}$ for which these notions are at odds is an interesting question. Along this line of thought, we note that even using a crude approximation to the marginal utility function in in Equation \ref{eq:equalized-impact-2} can still yield ``more fair'' outcomes than equalizing error rates alone. Allowing for this kind of flexibility could alleviate tensions with competing goals of having predictive parity or a high-accuracy classifier.

Finally, we could consider implementing this framework to equalize the impact of classifier mistakes for any decision problem. However, being able to implement our framework relies on approximating the marginal utility function $\psi$, which has been studied for wealth, and therefore seems eminently extendable to false negative fairness for loans. To implement equalized impact more generally, we need to approximate $\psi$ for different problems. Furthermore, we would need $\psi$ to have a good histogram approximation, as we did in Section \ref{sec:psi}.

\section{Conclusion}
In this paper, we introduced a new notion of fairness in supervised learning that we called equalized financial impact. This notion is very similar to an existing notion in computer science literature called equalized opportunity defined by Hardt, Price, and Srebro~\cite{HPS16}, and focuses on equalizing harms across groups much like Altman et al.~\cite{AVW18}. Equalized financial impact applies to situations in which our classifier is making a decision that has direct financial value to the people in question, and therefore for which we can reason that false negatives are more harmful to people who make less money. In situations where false negatives are equally harmful to people who make more money as they are to people who make less money, equalized financial impact is identical to equalized opportunity. 

 Equalized financial impact only differs from equalized opportunity because of the decreasing marginal utility of wealth. However, in these situations, we have argued that it is a more relevant fairness condition than equalized opportunity. Generally, one can consider equalized impact in arbitrary settings, but one needs to have a good model of the marginal utility function $\psi$ in order to apply the framework we have described. In this sense, equalized odds / opportunity is a simpler condition to enforce than equalizing impact.

The notion of equalized financial impact we provided was partially motivated by an observation we found in the paper of Binns~\cite{Binns17} and Barocas and Selbst~\cite{BS16}, which say that in situations where historical injustices are relevant, definitions of fairness should factor in these historical injustices and not just classify based on qualifications in the present day. Equalized financial impact does not directly factor in historical injustices, but indirectly benefits historically disadvantaged communities by benefitting communities with lower incomes. Just like~\cite{Binns17} and~\cite{BS16}, we believe there is considerable work to be done in coming up with meaningful definitions of fairness that can reflect historical disadvantages. Still, we believe that our notion can meaningfully benefit disadvantaged communities by rectifying some injustices they currently face.
\section{Acknowledgements}
We would like to thank Arturs Backurs, Milo Phillips-Brown, Miriam Schoenfield, Aloni Cohen, and Ran Canetti for useful discussions and for reading over a previous draft of this paper. We would also like to thank David Grant for providing me with helpful resources to help me understand fairness literature from a philosophy perspective. 
\bibliographystyle{alpha}
\bibliography{bibliography}
\end{document}